%% file: main.tex
\documentclass{article}

\usepackage{microtype}
\usepackage{graphicx}
\usepackage{subcaption}
\usepackage{booktabs}
\usepackage{enumitem}
\usepackage{arydshln}
\usepackage{amssymb}
\usepackage{mathtools}
\usepackage{multirow}
\usepackage{multicol}
\usepackage{array}
\usepackage{tabularx}
\usepackage{amsthm}
\usepackage{amsmath, amsfonts}
\usepackage{dsfont}
\usepackage{color,colortbl}
\usepackage[svgnames]{xcolor}
\usepackage{hyperref}

\usepackage[accepted]{icml2026}
\usepackage[capitalize,noabbrev]{cleveref}

\def\modelname{DyGRO-VLA}
\definecolor{darkblue}{rgb}{0.0, 0.17, 0.58}
\hypersetup{
  colorlinks,
  breaklinks,
  linkcolor=red,
  citecolor=darkblue,
  urlcolor=darkblue,
  pdftitle={DyGRO-VLA: Cross-Task Scaling of Vision--Language--Action Models via Dynamic Grouped Residual Optimization}
}

\newcommand{\argmax}{\arg\max}
\theoremstyle{plain}
\newtheorem{theorem}{Theorem}[section]
\newtheorem{proposition}[theorem]{Proposition}

\theoremstyle{definition}

\theoremstyle{remark}

\icmltitlerunning{DyGRO-VLA}

\begin{document}

\twocolumn[
  \icmltitle{DyGRO-VLA: Cross-Task Scaling of Vision--Language--Action Models via Dynamic Grouped Residual Optimization}

  \icmlsetsymbol{corr}{\ensuremath{\dagger}}

  \begin{icmlauthorlist}
    \icmlauthor{Sixu Lin}{cuhksz,Jiangxing}
    \icmlauthor{Yunpeng Qing}{zju}
    \icmlauthor{Litao Liu}{rutgers}
    \icmlauthor{Ming Zhou}{shailab}
    \icmlauthor{Ruixing Jin}{cuhksz}
    \icmlauthor{Xiaoyi Fan}{Jiangxing}
    \icmlauthor{Guiliang Liu}{cuhksz,SLAI,corr}
  \end{icmlauthorlist}

  \icmlaffiliation{cuhksz}{School of Data Science, The Chinese University of Hong Kong (Shenzhen)}
  \icmlaffiliation{SLAI}{Shenzhen Loop Area Institute}
  \icmlaffiliation{zju}{Zhejiang University}
  \icmlaffiliation{rutgers}{Rutgers University-New Brunswick} 
  \icmlaffiliation{shailab}{Shanghai AI Laboratory}
  \icmlaffiliation{Jiangxing}{Jiangxing Intelligence Technology Inc.}
  \icmlcorrespondingauthor{Guiliang Liu}{ liuguiliang@cuhk.edu.cn}

  \icmlkeywords{Machine Learning, ICML, Vision-Language-Action Models, Reinforcement Learning}

  \vskip 0.3in
]

\printAffiliationsAndNotice{\textsuperscript{\ensuremath{\dagger}}Corresponding author.}

\input{section/intro.tex}

\input{section/related_work.tex}

\input{section/pre.tex}

\input{section/method.tex}

\input{section/exp.tex}
\input{section/conclusion.tex}

\section*{Impact Statement}

This work aims to advance the field of machine learning and robotics by improving the efficiency and generalization of VLA models through Dynamic Grouped Residual Optimization (DyGRO). The proposed method could accelerate progress toward generalist robotic systems capable of performing a wider range of real-world tasks with greater adaptability and data efficiency.

Potential positive societal impacts include improved accessibility to automation, safer human-robot collaboration, and reduced data collection costs for robotic training. However, as with all advances in general-purpose AI and robotics, there are ethical considerations related to the potential misuse of autonomous systems, labor displacement in automation-sensitive industries, and biases originating from training data. We encourage future research to address these issues through transparent data practices, fairness-aware policy learning, and human-centered deployment standards.

\section*{Acknowledgments} This work is supported in part by 
Shenzhen Science and Technology Program under grant KJZD20240903104008012, Shenzhen Science and Technology Program under grant ZDCY20250901113000001, CUHK-CUHK(SZ)-GDSTC Joint Collaboration Fund No. 2025A0505000053, 
and Guangdong Provincial Key Laboratory of Mathematical Foundations for Artificial Intelligence (2023B1212010001).

\bibliography{example_paper}
\bibliographystyle{icml2026}

\newpage
\appendix
\onecolumn
\section{DyGRO-VLA}

\subsection{Fusion Module}
\label{app:fusion}

Following prior work~\cite{wang2025vla}, each Fusion layer performs multi-head attention with the action query $q^a_t$ and keys/values from three sources: (1) $q^a_t$ itself, (2) the concatenation of the action hidden state $h_t^{\text{act}}$ and proprioceptive tokens $h_t^{\text{pro}}=\mathrm{Enc}(p_t)$, and (3) the conditioning representation $h_t^{\text{cond}}$. A learnable scalar gate dynamically reweights the contribution of the conditioning branch, allowing the model to adaptively integrate task-relevant contextual information.

\subsection{Residual RL}
\label{app:residual}
\textbf{Residual MoE.} We parameterize the action distribution with an std model that predicts the standard deviation $\sigma_\theta$, while the Residual MoE provides the mean $\mu_\theta$. This formulation defines a factorized Gaussian policy $\pi_\theta(\mathbf{a} \mid s) = \mathcal{N}(\mu_\theta(s), \mathrm{diag}(\sigma_\theta^2(s)))$, which enables stochastic actions during RL exploration and optimization.

\textbf{Residual Action Space.} When refining the base policy with a residual policy, we aim to keep the trajectory close to the original to avoid failure, which means the residual policy should make subtle adjustments. To control this, we bound the output within a specific range. We use the tanh function and mapping the residual action with a hyperparameter $\alpha$, ensuring it stays within $(-\alpha, \alpha)$.

\noindent\textbf{Offline-to-Online Critic Objective.}
Our critic for residual RL is a lightweight ResMLP with attention pooling.
Following Cal-QL~\cite{nakamoto2023cal}, we maintain an ensemble of $K$ critics
$\{Q_{\theta_i}\}_{i=1}^{K}$ and optimize them with an $h$-step Bellman regression term
together with a Cal-QL calibration regularizer:
\begin{equation}\label{eq:calql_loss_ens}
\mathcal{L}_{Q}(\{\theta_i\})
=\frac{1}{K}\sum_{i=1}^{K}\Big(
\mathcal{L}_{\mathrm{TD}}(\theta_i)
+\lambda_{\mathrm{Cal}}\,\mathcal{L}_{\mathrm{CalReg}}(\theta_i)
\Big).
\end{equation}

\noindent
\textbf{Target ensemble and conservative aggregation.}
For stable target estimation, we maintain a slowly-updated target ensemble
$\{\bar{Q}_{\bar{\theta}_j}\}_{j=1}^{K}$ and define the aggregated target critic as
\begin{equation}\label{eq:qbar_agg}
\bar{Q}_{\min}(s,\mathbf{a})
\;=\;\min_{j\in\{1,\dots,K\}}\bar{Q}_{\bar{\theta}_j}(s,\mathbf{a}),
\end{equation}
which yields the backup operator $\mathcal{B}^{\pi}\bar{Q}_{\min}$.

\noindent
\textbf{$h$-step (chunk) Bellman regression.}
Each critic is trained by regressing to the shared target:
\begin{equation}\label{eq:calql_td_ens}
\mathcal{L}_{\mathrm{TD}}(\theta_i)
=\mathbb{E}_{(s_t,\mathbf{a}_t,r_t^{(h)},s_{t+h})\sim\mathcal{D}}
\Bigl[
\bigl(
Q_{\theta_i}(s_t,\mathbf{a}_t)-\mathcal{B}^{\pi}\bar{Q}_{\min}(s_t,\mathbf{a}_t)
\bigr)^{2}
\Bigr].
\end{equation}

\noindent
The corresponding $h$-step Bellman backup is
\begin{equation}\label{eq:chunk_backup}
\mathcal{B}^{\pi}\bar{Q}_{\min}(s_t,\mathbf{a}_t)
=
r_t^{(h)}
+\gamma^{h}\,
\mathbb{E}_{\mathbf{a}'\sim\pi(\cdot\mid s_{t+h})}\big[\bar{Q}_{\min}(s_{t+h},\mathbf{a}')\big],
\end{equation}
where the $h$-step return is
\begin{equation}\label{eq:h_step_return}
r_t^{(h)}=\sum_{i=0}^{h-1}\gamma^{i}\,r_{t+i}.
\end{equation}

\noindent
\textbf{Cal-QL calibration regularizer.}
To enable a smooth transition from offline data to online rollouts, we apply the Cal-QL calibration
regularizer to each ensemble member, encouraging high value on policy actions while remaining anchored
to dataset actions:
\begin{equation}\label{eq:calql_reg_ens}
\begin{aligned}
\mathcal{L}_{\mathrm{CalReg}}(\theta_i)
= \mathbb{E}_{s_t\sim\mathcal{D}}
\Big[
&\;\mathbb{E}_{\mathbf{a}\sim\pi(\cdot\mid s_t)}
\big[\max\!\big(Q_{\theta_i}(s_t,\mathbf{a}),\,V^{\mu}(s_t)\big)\big] \\
&-\mathbb{E}_{\mathbf{a}\sim\mathcal{D}(\cdot\mid s_t)}
\big[Q_{\theta_i}(s_t,\mathbf{a})\big]
\Big].
\end{aligned}
\end{equation}

\noindent
Here $\mathbf{a}_t=a_{t:t+h-1}$ denotes an action chunk and $\pi(\mathbf{a}\mid s)$ is the chunk-level policy.
We approximate $V^{\mu}(s_t)$ using Monte Carlo returns from the offline dataset.

\newcommand{\func}[1]{\textsc{#1}}

\section{Proof}\label{app:proof}
\begin{proposition}[Variational Information Bottleneck for Action Regression]
\label{thm:ib_action_dv_proof}
Let $Z$ be a latent representation produced by an encoder $p_\theta(z|o)$, 
and let $\pi_\theta(a|z)$ denote a probabilistic action decoder.
Then, maximizing the IB objective (\ref{eq:ib-ori}) is equivalent to:
\begin{align}
\min_{\theta} \mathbb{E}_{p_\theta(z|o)}
\!\big[-\log \pi_\theta(a|z)\big]
+
\lambda_{\mathrm{IB}}\!
\Big[
\mathbb{E}_{P_{OZ}}[T_\psi(o,z)]
-
\log \mathbb{E}_{P_O P_Z}\![e^{T_\psi(o,z)}]
\Big]\nonumber
\label{eq:l_base_action_dv_proof}
\end{align}
where 1) $T_\psi: \mathcal{O} \times \mathcal{Z} \rightarrow \mathbb{R}$ denotes a neural critic, and 2) $ P_{OZ} $ and $ P_O P_Z $ denote the joint distribution of $O$ and $Z$, as well as the product of their marginals.
\end{proposition}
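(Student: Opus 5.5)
The IB objective (\ref{eq:ib-ori}) is presumably $\max_\theta I(Z;A) - \lambda_{\mathrm{IB}} I(O;Z)$, with $Z$ drawn from $p_\theta(z|o)$ and the Markov chain $A \leftrightarrow O \leftrightarrow Z$. The plan is to replace each mutual information term by a tractable surrogate and then negate to obtain a minimization problem. The first term gets a variational (Barber--Agakov) lower bound. The second term gets the Donsker--Varadhan representation, which, once optimized over the critic, reproduces $I(O;Z)$ itself.

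\textbf{Step 1: the relevance term.} Write
\[
I(Z;A)=H(A)+\mathbb{E}_{p(a,z)}[\log p(a|z)].
\]
For any decoder $\pi_\theta(a|z)$, nonnegativity of $\mathrm{KL}(p(a|z)\,\|\,\pi_\theta(a|z))$ gives
\[
I(Z;A)\ge H(A)+\mathbb{E}_{p(o,a)}\mathbb{E}_{p_\theta(z|o)}[\log \pi_\theta(a|z)].
\]
Equality holds when $\pi_\theta$ equals the true posterior. Since $H(A)$ is fixed by the data, it does not depend on $\theta$ and can be dropped. So maximizing $I(Z;A)$ reduces to minimizing $\mathbb{E}_{p_\theta(z|o)}[-\log\pi_\theta(a|z)]$. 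Here the outer expectation over data pairs $(o,a)$ is kept implicit, as in the statement.

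\textbf{Step 2: the compression term.} Since $I(O;Z)=\mathrm{KL}(P_{OZ}\,\|\,P_OP_Z)$, the Donsker--Varadhan representation gives
\[
I(O;Z)=\sup_{T}\ \mathbb{E}_{P_{OZ}}[T]-\log\mathbb{E}_{P_OP_Z}[e^{T}].
\]
The supremum is attained at $T^*=\log\frac{dP_{OZ}}{d(P_OP_Z)}+c$. Parameterize $T$ by the neural critic $T_\psi$. Under a sufficient-expressivity assumption, the inner supremum over $\psi$ recovers $I(O;Z)$ exactly.

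\textbf{Step 3: combine.} Substitute both expressions into $-\big(I(Z;A)-\lambda_{\mathrm{IB}}I(O;Z)\big)$. The result is the displayed objective, minimized over $\theta$, with the bracketed DV term understood as evaluated at the optimal critic $\psi^*(\theta)$ for each $\theta$.

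\textbf{Main obstacle.} The two bounds point in different directions. The decoder term upper-bounds $-I(Z;A)$, which is the correct direction for a minimization. The DV estimate, however, lower-bounds $I(O;Z)$ for any fixed critic, and that is the wrong direction for a quantity we penalize. The word ``equivalent'' therefore needs two conditions. For the DV term, I will state it as a saddle-point formulation $\min_\theta\max_\psi$ (the inner maximization is taken over the critic alone) and assume the critic class is rich enough for the supremum to be tight. For the decoder term, equivalence holds at the optimum, either because the decoder family contains the true posterior or in the sense that the minimizer over $(\theta,\pi)$ drives the KL gap to zero. I will also check that $P_Z$ depends on $\theta$ through $p_\theta$, so that the DV term carries gradients to the encoder.
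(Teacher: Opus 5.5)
Your proposal is correct and follows the same route as the paper's proof: negate the objective, write $I(Z;A)$ as $\mathbb{E}[\log p(a|z)]$ plus the $\theta$-independent $H(A)$, replace $p(a|z)$ by the decoder $\pi_\theta(a|z)$, and substitute the Donsker--Varadhan representation for $I(O;Z)=D_{\mathrm{KL}}(P_{OZ}\|P_OP_Z)$. Your Barber--Agakov gap and your $\min_\theta\max_\psi$ saddle-point reading make precise the two steps the paper writes as equalities: the unstated swap of $p(a|z)$ for $\pi_\theta(a|z)$ in its step (3), and the ``absorbing the supremum into optimization over $\psi$'' in its step (4).
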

\begin{proof}
Starting from the Information Bottleneck functional:
    \begin{align*}
        &\max_{p_\theta(z|o)} I(Z;A) - \lambda_{\mathrm{IB}} I(Z;O)\\
        \stackrel{(1)}{=}&\min_{p_\theta(z|o)} \left[-I(Z;A) + \lambda_{\mathrm{IB}}I(Z;O)\right]\\[4pt]
        \stackrel{(2)}{=}&\min_{p_\theta(z|o)} 
        \mathbb{E}_{p(o,a)p_\theta(z|o)}[-\log p(a|z)] 
        + \lambda_{\mathrm{IB}} D_{\mathrm{KL}}\!\left(P_{OZ}\| P_O P_Z\right)\\[4pt]
        \stackrel{(3)}{=}&\min_{p_\theta(z|o)} 
        \mathbb{E}_{(o,a)\sim\mathcal{D},\,p_\theta(z|o)}[-\log \pi_\theta(a|z)]
        + \lambda_{\mathrm{IB}}\,\sup_{T_\psi}\left\{
        \mathbb{E}_{P_{OZ}}[T_\psi(o,z)]
        - \log\mathbb{E}_{P_O P_Z}[e^{T_\psi(o,z)}]
        \right\}\\[4pt]
        \stackrel{(4)}{=}&\min_{p_\theta(z|o)} 
        \mathbb{E}_{(o,a)\sim\mathcal{D},\,p_\theta(z|o)}[-\log \pi_\theta(a|z)]
        + \lambda_{\mathrm{IB}}
        \left(
            \mathbb{E}_{P_{OZ}}[T_\psi(o,z)]
            - \log\mathbb{E}_{P_O P_Z}[e^{T_\psi(o,z)}]
        \right)
        + \mathds{C}\\[4pt]
        \stackrel{(5)}{=}&\min_{p_\theta(z|o)} 
        \mathbb{E}_{(o,a)\sim\mathcal{D},\,p_\theta(z|o)}[-\log \pi_\theta(a|z)]
        + \lambda_{\mathrm{IB}} I_{\text{DV}}(O;Z)
        + \mathds{C}\\[4pt]
        &\text{where } I_{\text{DV}}(O;Z)
        = \mathbb{E}_{P_{OZ}}[T_\psi(o,z)] 
            - \log \mathbb{E}_{P_O P_Z}[e^{T_\psi(o,z)}]
        \text{ approximates } I(O;Z).
    \end{align*}

    \begin{enumerate}[label={(\arabic*)}]
        \item holds because maximizing $I(Z;A)-\beta I(Z;O)$ is equivalent to minimizing its negative.
        
        \item holds by substituting the definitions
        \(I(Z;A)=\mathbb{E}[\log p(a|z)]-\mathbb{E}[\log p(a)]\)
        and 
        \(I(Z;O)=D_{\mathrm{KL}}(P_{OZ}\|P_OP_Z)\),
        where the constant $\mathbb{E}[\log p(a)]$ is omitted since it is independent of $\theta$.
        
        \item holds by applying the Donsker--Varadhan (DV) representation of the KL divergence:
        \[
            D_{\mathrm{KL}}(P_{OZ}\|P_OP_Z)
            = \sup_{T_\psi}\left\{
            \mathbb{E}_{P_{OZ}}[T_\psi(o,z)]
            - \log\mathbb{E}_{P_OP_Z}[e^{T_\psi(o,z)}]
            \right\}.
        \]
        This introduces the variational critic function $T_\psi(o,z)$.
        
        \item holds by absorbing the supremum into optimization over $\psi$ for notation consistency. 
        The additive constant $\mathds{C}$ encompasses all parameter-independent terms.
        
        \item holds by defining
        \( I_{\text{DV}}(O;Z) \)
        as the Donsker--Varadhan estimate of mutual information.
        Thus, the resulting loss corresponds to the trainable Information Bottleneck formulation:
        \[
            \boxed{
            \mathcal{L}_{\mathrm{base}}
            =
            \mathbb{E}[-\log \pi_\theta(a|z)]
            +
            \lambda_{\mathrm{IB}}
            \left(
            \mathbb{E}_{P_{OZ}}[T_\psi(o,z)]
            - \log\mathbb{E}_{P_OP_Z}[e^{T_\psi(o,z)}]
            \right)
            }.
        \]
    \end{enumerate}
\end{proof}

\section{Real-World Details}\label{app:real}

\textbf{Real-World Setups.}
We deploy the training checkpoint zero-shot on the real robot without any real-world fine-tuning.
We evaluate \modelname{} on four real-world tasks from RoboTwin2.0: Beat Block Hammer, Pick Dual Bottles, Stack Bowls Two, and Place Empty Cup.
These tasks cover diverse manipulation skills, including contact-rich tool-use, multi-object grasping, object stacking, and goal-conditioned placement.
Beat Block Hammer requires accurate tool-object interaction, Pick Dual Bottles evaluates the ability to grasp multiple objects, Stack Bowls Two tests spatial alignment and stable placement, and Place Empty Cup examines pick-and-place generalization in real-world scenes.

\textbf{Real-World Visualization.}
As shown in Figure~\ref{fig:real-demo}, our proposed multi-task training framework achieves effective zero-shot sim-to-real transfer.
After being trained only in simulation, \modelname{} can be directly deployed on the real robot without additional real-world data or fine-tuning.
The qualitative results demonstrate that the learned multi-task policy generalizes to diverse real-world manipulation scenarios, including tool-use, dual-object picking, stacking, and object placement.

\begin{figure*}[t]
  \centering

  \begin{subfigure}{0.95\textwidth}
    \centering
    \includegraphics[width=\linewidth]{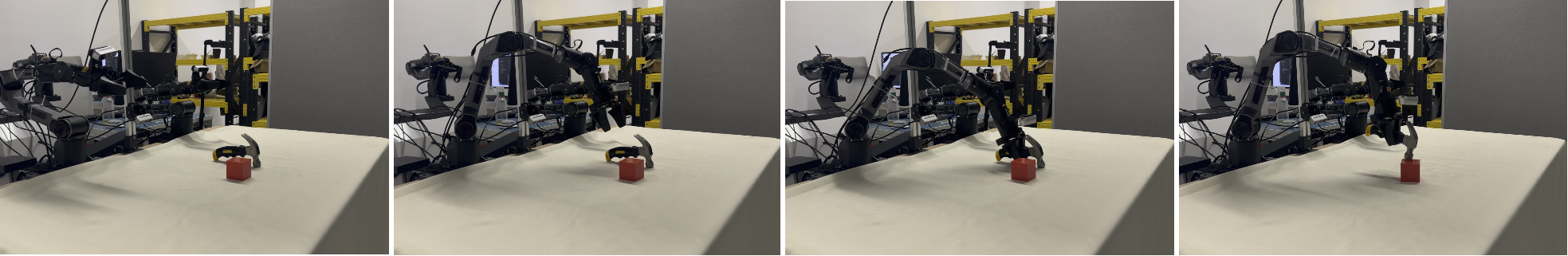}
    \caption{Beat Block Hammer}
  \end{subfigure}

  \vspace{3mm}

  \begin{subfigure}{0.95\textwidth}
    \centering
    \includegraphics[width=\linewidth]{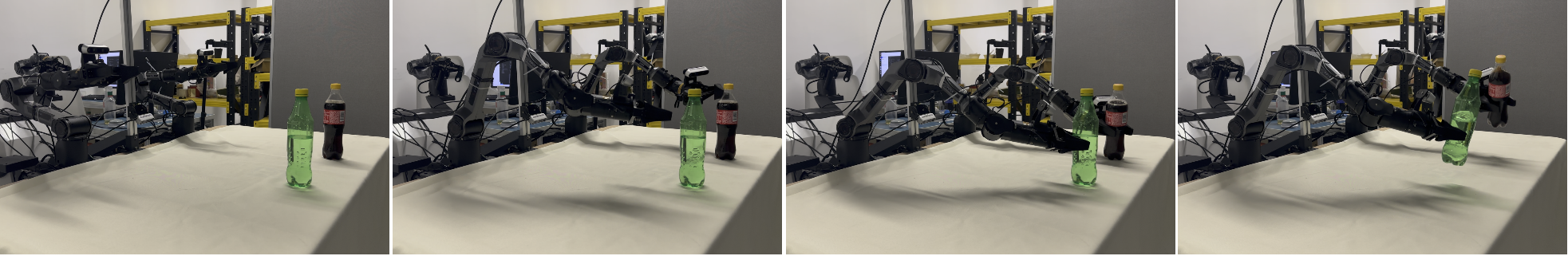}
    \caption{Pick Dual Bottles}
  \end{subfigure}

  \vspace{3mm}

  \begin{subfigure}{0.95\textwidth}
    \centering
    \includegraphics[width=\linewidth]{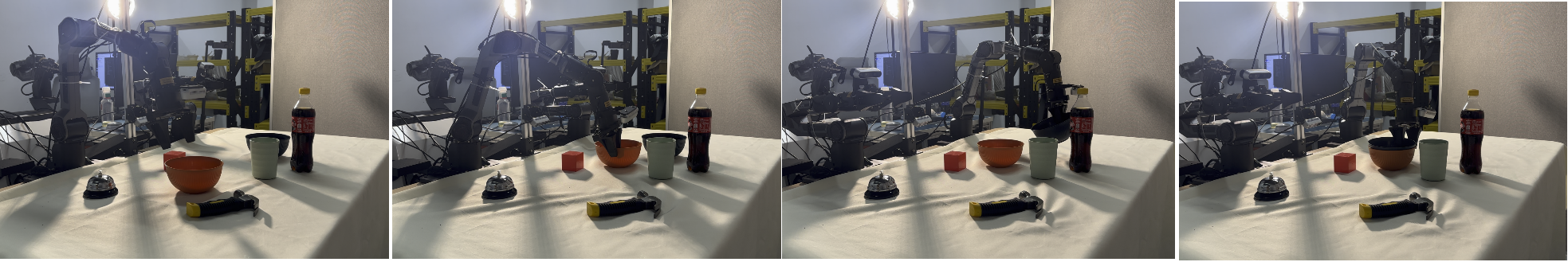}
    \caption{Stack Bowls Two}
  \end{subfigure}

  \vspace{3mm}

  \begin{subfigure}{0.95\textwidth}
    \centering
    \includegraphics[width=\linewidth]{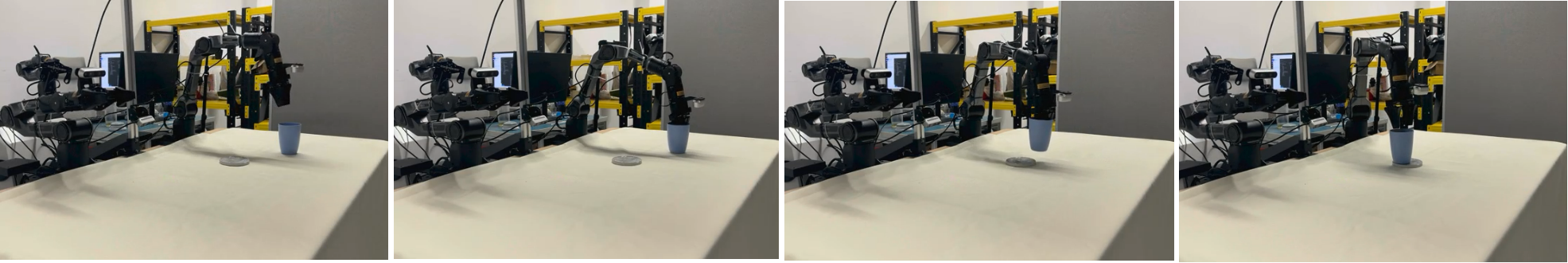}
    \caption{Place Empty Cup}
  \end{subfigure}

  \caption{\textbf{Sim-to-real qualitative demonstrations of \modelname{} on RoboTwin2.0.}
  The policy is trained in simulation and directly deployed in the real world. We show four real-world tasks:
  Beat Block Hammer, Pick Dual Bottles, Stack Bowls Two, and Place Empty Cup.}
  \label{fig:real-demo}
\end{figure*}

\section{Pseudocode}
\label{app:pseudocode}
The pseudocode of \modelname{} is provided in Algorithm~\ref{algo:vla}.

\begin{algorithm}[h]
\caption{\textbf{DyGRO-VLA:} IB Pretraining + Online MoRR}
\label{algo:vla}
\begin{algorithmic}[1]
\REQUIRE $\mathcal{D}_{\mathrm{demo}},\,\mathcal{E},\,\mathcal{T};\ 
h,\,N_{\mathrm{on}},\,U;\ 
\eta_{\mathrm{off}},\,\eta_{\pi},\,\eta_{Q};\ 
\lambda_{\mathrm{IB}},\,\lambda_{\mathrm{CL}},\,\lambda_{\mathrm{LB}};\ 
\theta,\,\nu,\,\phi,\,\psi$

\ENSURE Executed chunk $a_t = a^{\mathrm{base}}_t + \Delta a_t$

\STATE Initialize online replay buffer $\mathcal{D}_{\mathrm{on}} \leftarrow \emptyset$

\vspace{2pt}
\STATE \textcolor{darkblue}{\textbf{Stage I: Offline pretraining}}
\FOR{$i = 1$ \textbf{to} $N_{\mathrm{off}}$}
    \STATE Sample $(o,l,p,a^\star)\sim \mathcal{D}_{\mathrm{demo}}$
    \STATE Sample latent $z \sim p_\theta(z \mid o,l,p)$
    \STATE Predict base chunk $a^{\mathrm{base}} \leftarrow \pi^{\mathrm{base}}_\theta(\cdot \mid z)$
    \STATE $\mathcal{L}_{\mathrm{IB}} \leftarrow \mathrm{DV}\text{-}\mathrm{IB}(o,z;\nu)$ \hfill // variational IB term
    \STATE $\mathcal{L}_{\mathrm{base}} \leftarrow \mathcal{L}_{\mathrm{BC}}(a^{\mathrm{base}}, a^\star) + \lambda_{\mathrm{IB}}\,\mathcal{L}_{\mathrm{IB}}$
    \STATE $(\theta,\nu) \leftarrow (\theta,\nu) - \eta_{\mathrm{off}}\,\nabla_{(\theta,\nu)} \mathcal{L}_{\mathrm{base}}$
\ENDFOR
\STATE Freeze $\theta$ \hfill // preserve task-sharing representation

\vspace{2pt}
\STATE \textcolor{orange}{\textbf{Stage II: Online finetuning}}
\FOR{$k = 1$ \textbf{to} $N_{\mathrm{on}}$}
    \STATE Reset $\mathcal{E}$ and sample task identity $\zeta \sim \mathcal{T}$

    \STATE \textbf{Rollout}
    \WHILE{\textbf{not done}}
        \STATE Observe $(o_t,l_t,p_t)$
        \STATE $(z_t,\tilde{z}^T_t) \leftarrow \mathrm{Enc}_\theta(o_t,l_t,p_t)$ \hfill // latent + task embedding
        \STATE $a^{\mathrm{base}}_t \leftarrow \pi^{\mathrm{base}}_\theta(\cdot \mid z_t)$

        \STATE $\omega_t \leftarrow \mathrm{Top}\text{-}m\text{Softmax}(\mathrm{Router}_\phi(\tilde{z}^T_t))$
        \FOR{each expert $i$ in top-$m(\omega_t)$}
            \STATE Sample residual chunk $\tilde{a}_{t,i} \sim \pi^\Delta_{i,\phi}(\cdot \mid z_t, a^{\mathrm{base}}_t)$
        \ENDFOR
        \STATE $\Delta a_t \leftarrow \sum_{i \in \text{top-}m(\omega_t)} \omega_{t,i}\, \tilde{a}_{t,i}$
        \STATE Execute chunk $a_t = a^{\mathrm{base}}_t + \Delta a_t$ for $h$ steps, receive $(r^{(h)}_t, o_{t+h}, \texttt{done})$
        \STATE $(z_{t+h},\tilde{z}^T_{t+h}) \leftarrow \mathrm{Enc}_\theta(o_{t+h},l_{t+h},p_{t+h})$
        \STATE Store $(z_t,\tilde{z}^T_t,\zeta,\omega_t,\Delta a_t,r^{(h)}_t,z_{t+h},\texttt{done})$ into $\mathcal{D}_{\mathrm{on}}$
    \ENDWHILE

    \STATE \textbf{Updates}
    \FOR{$u = 1$ \textbf{to} $U$}
        \STATE Sample minibatch $\mathcal{M}\sim \mathcal{D}_{\mathrm{on}}$

        \STATE $\mathcal{L}_{Q} \leftarrow \mathcal{L}_{\mathrm{TD/Cal\text{-}QL}}(\mathcal{M};\psi,\theta,\phi)$
        \STATE $\psi \leftarrow \psi - \eta_{Q}\,\nabla_\psi \mathcal{L}_{Q}$

        \STATE $\mathcal{L}_{\mathrm{RL}} \leftarrow \mathcal{L}_{\mathrm{actor}}(\mathcal{M};\psi,\theta,\phi)$ \hfill // SAC-style on augmented policy
        \STATE $\mathcal{L}_{\mathrm{CL}} \leftarrow \mathcal{L}_{\mathrm{InfoNCE}}(\mathcal{M};\phi)$ \hfill // shape $\tilde{z}^T$
        \STATE $\mathcal{L}_{\mathrm{LB}} \leftarrow \mathcal{L}_{\mathrm{load\text{-}balance}}(\mathcal{M};\phi)$ \hfill // prevent expert collapse
        \STATE $\phi \leftarrow \phi - \eta_{\pi}\,\nabla_\phi\Big(\mathcal{L}_{\mathrm{RL}} + \lambda_{\mathrm{CL}}\mathcal{L}_{\mathrm{CL}} + \lambda_{\mathrm{LB}}\mathcal{L}_{\mathrm{LB}}\Big)$
    \ENDFOR
\ENDFOR

\STATE \textbf{return} $\pi(a_t\mid o_t,l_t,p_t)=\pi^{\mathrm{base}}_\theta(\cdot\mid z_t)+\Delta a_t$
\end{algorithmic}
\end{algorithm}

\end{document}

%% file: section/intro.tex
\begin{abstract}
Recent progress in Reinforcement Learning (RL) provides a principled approach to optimizing Vision-Language-Action (VLA) models, facilitating a shift from trajectory imitation to active learning in the task environment.
Despite improvements in control precision, most RL optimizers remain task-specific, which reduces VLA models from generalist controllers to policies that overfit to a narrow set of tasks.
In this study, we conduct an in-depth analysis of this phenomenon and highlight the importance of cross-task feature representations for improving the generalizability of VLA models.
Motivated by this finding, we introduce \modelname, a two-stage optimization framework that 1) effectively captures cross-task latent representations based on information-theoretic principles, and 2) dynamically refines policy optimization via a mixture-of-RL-residuals.
\modelname\;enables the RL optimizer to exploit task-relevant latent information while strategically mitigating adverse interference on the learned representations throughout the optimization process.
We evaluate our approach on LIBERO, RoboTwin2 benchmarks, and further validate it on real world, demonstrating consistent improvements over strong baselines under multi-task training and distribution shift.
\end{abstract}
\vspace{-0.2in}
\section{Introduction}
\vspace{-0.05in}
Under the rapid development of large-scale foundation models, Vision-Language-Action (VLA) models have emerged as a promising paradigm for generalist robot manipulation policy by unifying perception, language understanding, and action control~\citep{ma2024survey}. Compared with lightweight task-specific policies~\citep{chi2023diffusion,zhao2023learning}, VLAs typically offer stronger semantic grounding and better multi-task transfer, largely due to their higher representational capacity and the ability to reuse shared structure across heterogeneous tasks. As a result, VLAs are widely viewed as an important step toward general-purpose robots that can solve diverse real-world tasks with a unified model~\citep{black2024pi_0,intelligence2025pi_}.

Despite these advances, training a high-performing generalist policy solely from offline demonstrations remains challenging. Compared to datasets for language modeling and visual understanding, robotic datasets are relatively small, less diverse, and costly to collect~\citep{xing2025shortcut}, especially for long-tail scenarios that are less likely to be encountered in practice.
To mitigate the data scarcity issues, recent studies incorporate online Reinforcement Learning (RL) to optimize VLA in the post-training
~\citep{chen2025conrft,lu2025vla,li2025simplevla,zang2025rlinf}. 

While RL-based optimization can effectively improve success rates on individual dexterous manipulation tasks~\citep{luo2024serl,luo2025precise}, these gains often come at the expense of losing cross-task scalability. Our in-depth study reveals that as the number of tasks increases, the performance of RL optimization degrades substantially (see Figure~\ref{fig:task_conflict}). In some cases, it even harms the original VLA model that was pre-trained on offline data (see Figure~\ref{fig:cat_forget}).
We find that this failure mode arises because the RL optimizer can distort the shared representation learned during pretraining, effectively isolating task-specific experience and reducing cross-task knowledge sharing (see Figure~\ref{fig:tsne}). This distortion leads to catastrophic forgetting and weakens cross-task adaptation. Therefore, balancing the trade-off between dexterity and generalizability remains a significant open challenge.

To address this challenge, we propose Dynamic Grouped Residual Optimization (DyGRO), a cross-task optimizer for VLA models.
Specifically, \modelname\;follows a two-stage optimization paradigm with the following objectives:

1) {\it Learning shared representations across tasks.} \modelname\; follows the Information Bottleneck (IB) principle~\citep{tishby2000information} to extract action-relevant features from visual observations and robot states. By preserving useful information for guiding robot motion while discarding nuisance factors (e.g., background clutter and lighting variations), these features can be more effectively shared across tasks. Practically, to make IB learning computationally tractable, we derive a variational lower bound tailored to offline robot behavior modeling and use it to jointly train the feature representation and action prediction models.

2) {\it Finetuning with Mixture-of-RL-Residuals (MoRR).} To optimize the policy without disrupting the shared cross-task representation, we design the RL policy to predict a residual as refinements of the prediction from the action model~\citep{johannink2019residual}. However, classic residual learning is typically tailored to a single task. To generalize across a variety of tasks, we strategically combine residual refinements from a mixture of RL policies, where the mixture weights are determined by a dynamic routing mechanism. Consequently, while each policy only specializes in a subset of tasks, the router can adaptively select and compose them to produce the most effective refinement for the current tasks. To implement MoRR, we introduce a task-embedding method that provides task information to the router, based on which we design an online training scheme that jointly updates the RL residual policies and the routing model.

To validate the effectiveness of \modelname, we conduct extensive experiments across diverse multi-task manipulation benchmarks, including LIBERO\citep{liu2023libero} and RoboTwin2\cite{chen2025robotwin}. Across all settings, \modelname consistently outperforms state-of-the-art baselines, achieving the highest average success rate of 97.1\% on LIBERO, an absolute gain of +4.4\% over its offline base model---and demonstrating substantial improvements on the most challenging LIBERO-Long suite (+9.8\%). Furthermore, in real-world evaluations on RoboTwin2, \modelname~attains the best overall success in simulation (79.2\%) and surpasses RFT baselines under Sim2Real transfer, particularly on complex bimanual and long-horizon tasks. These results highlight \modelname~as a scalable and robust solution for efficient multi-task reinforcement fine-tuning and real-world robotic manipulation.

%% file: section/related_work.tex
\vspace{-0.1in}
\section{Related Work}
\label{gen_inst}

\textbf{Multi-task Learning.} A central challenge in multi-task learning is the interference among task gradients, which can hinder optimization efficiency. Prior work has sought to mitigate this issue through sample- or task-level reweighting strategies~\cite{peng2023robust}. Broadly, existing approaches fall into two categories: (1) designing reweighting schemes to balance the relative contributions of different tasks~\citep{sener2018multi,parisotto2015actor,yu2020gradient}, and (2) adjusting per-task gradient directions to reduce conflicts and encourage cooperative learning~\citep{yu2020gradient,liu2021conflict,chen2020just,wang2020gradient,bohn2024task}. More recently, token-level MoE routing methods have also been proposed to address gradient conflicts at finer granularity~\cite{yang2024solving}. \citep{yang2020multi,huang2024mentor,kong2025mastering,wu2025mixture} utilize routing strategy to resolve multi-task gradient conflicts.

\textbf{Reinforcement Learning for Vision-Language-Action Models.} Recent studies demonstrate that reinforcement finetuning can significantly improve VLA performance in manipulation tasks~\citep{huang2025co,liu2025can,chen2025conrft,guo2025improving,tan2025interactive,li2025simplevla,zang2025rlinf}. \cite{liu2025can} analyzes the generalization benefits brought by RL to VLAs, while \cite{chen2025conrft} successfully deploys reinforced fine-tuning in real-world sparse-reward settings, showcasing effective adaptation in challenging robotic scenarios. In addition, \cite{guo2025improving} proposes an iterative learning paradigm that stabilizes training and improves efficiency. Different from these works, our method emphasizes the \emph{scalability} of VLA post-training. While prior approaches focus on task-level performance gains or real-world adaptation, we argue that RL should preserve the intrinsic generalization ability of large VLAs, fundamentally distinguishing them from lightweight, task-specific models.

%% file: section/pre.tex
\vspace{-0.1in}
\section{Problem Formulation}
\label{sec:problem-formulation}

In this section, we discuss the problem of cross-task fine-tuning of VLA policies using RL methods and formulate the key challenges with empirical evidence.

\noindent\textbf{Multi-task Reinforcement Learning.}
We consider solving a distribution of tasks $\zeta\sim p(\mathrm{\zeta})$ with a single policy $\pi$. In this setting, each task induces a POMDP
$\mathcal{M}^{\zeta}=(\mathcal{S}^{\zeta},\mathcal{A},P^{\zeta},R^{\zeta},\mathcal{O}^{\zeta},\rho^{\zeta},\gamma)$.
A task begins at an initial state $s_0 \sim \rho^{\zeta}$. 
At each timestep $t$, the agent receives an observation $o_t \in \mathcal{O}^{\zeta}$ composed of RGB images, robot proprioceptive information, and a natural language instruction. 
The agent then selects an action $a_t \in \mathcal{A}$ from the shared primitive action space, receives a reward $r_t \sim R^{\zeta}$, and transitions to the next state $s_{t+1}$ following the transition function $P^{\zeta}$.
For brevity, we assume sparse rewards for robotic control: the agent receives a success reward of $+1$ upon task completion and incurs a step penalty of $-0.01$ otherwise. 
The objective of the policy $\pi$ is to maximize the expected discounted return across the task distribution:
\vspace{-4pt}
\begin{equation}
\pi^*=\argmax_{\pi}\;
\mathbb{E}_{\zeta\sim p(\zeta)}\;
\mathbb{E}_{\tau\sim\pi,\mathcal{M}^{\mathcal{T}}}\Big[\sum_{t=0}^{\infty}\gamma^t r_t\Big].
\label{eq:mtrl_obj}
\end{equation}
\vspace{-8pt}

Striving for temporal abstraction, we implement $\pi(\mathbf{a}_t\mid o_t)$ as \emph{action chunking} policy~\citep{zhao2023learning,huang2025co,li2025reinforcement}.
An action chunk is defined as
$\mathbf{a}_t=a_{t:t+h-1}
\in\mathcal{A}^{h}$.

\noindent\textbf{Catastrophic Forgetting in Cross-Tasks Adaptation.} Developing scalable and general-purpose VLA models is a central objective in embodied robotic control \citep{black2024pi_0,shi2025hi,intelligence2025pi_}. 
Although recent advances in reinforcement learning (RL) optimization have substantially improved the dexterity and task success rates of VLA policies \citep{li2025simplevla,lu2025vla}, these models often remain highly task-specific.
In particular, such RL optimizers tend to overfit to individual tasks and consequently struggle to retain prior knowledge in continual multi-task learning settings.

For empirical evidence, we test on LIBERO~\cite{liu2023libero} benchmark. We conduct empirical analysis using the VLA model introduced in Section~\ref{sec:offline}. As shown in Figure~\ref{fig:cat_forget}, when training a VLA on LIBERO-Spatial, we observe that performance on unrelated tasks in LIBERO-Object rapidly declines as training progresses. The success rate drops significantly, indicating that specialization on spatial tasks erodes the model's capacity to handle object-centric tasks.

\begin{figure}[t!]
    \centering
    \begin{minipage}[t]{0.48\linewidth}
        \centering
        \includegraphics[width=\linewidth]{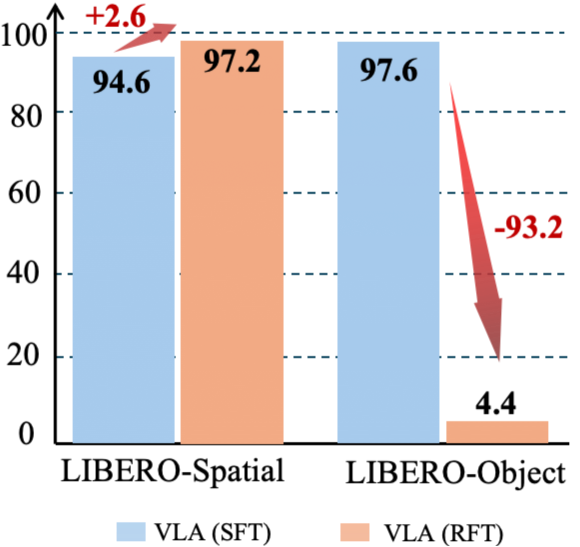}
        \caption{Catastrophic Forgetting. RFT may improve the trained tasks but leads to increasing performance drops on other tasks.}
        \label{fig:cat_forget}
    \end{minipage}\hspace{2mm}
    \begin{minipage}[t]{0.48\linewidth}
        \centering
        \includegraphics[width=\linewidth]{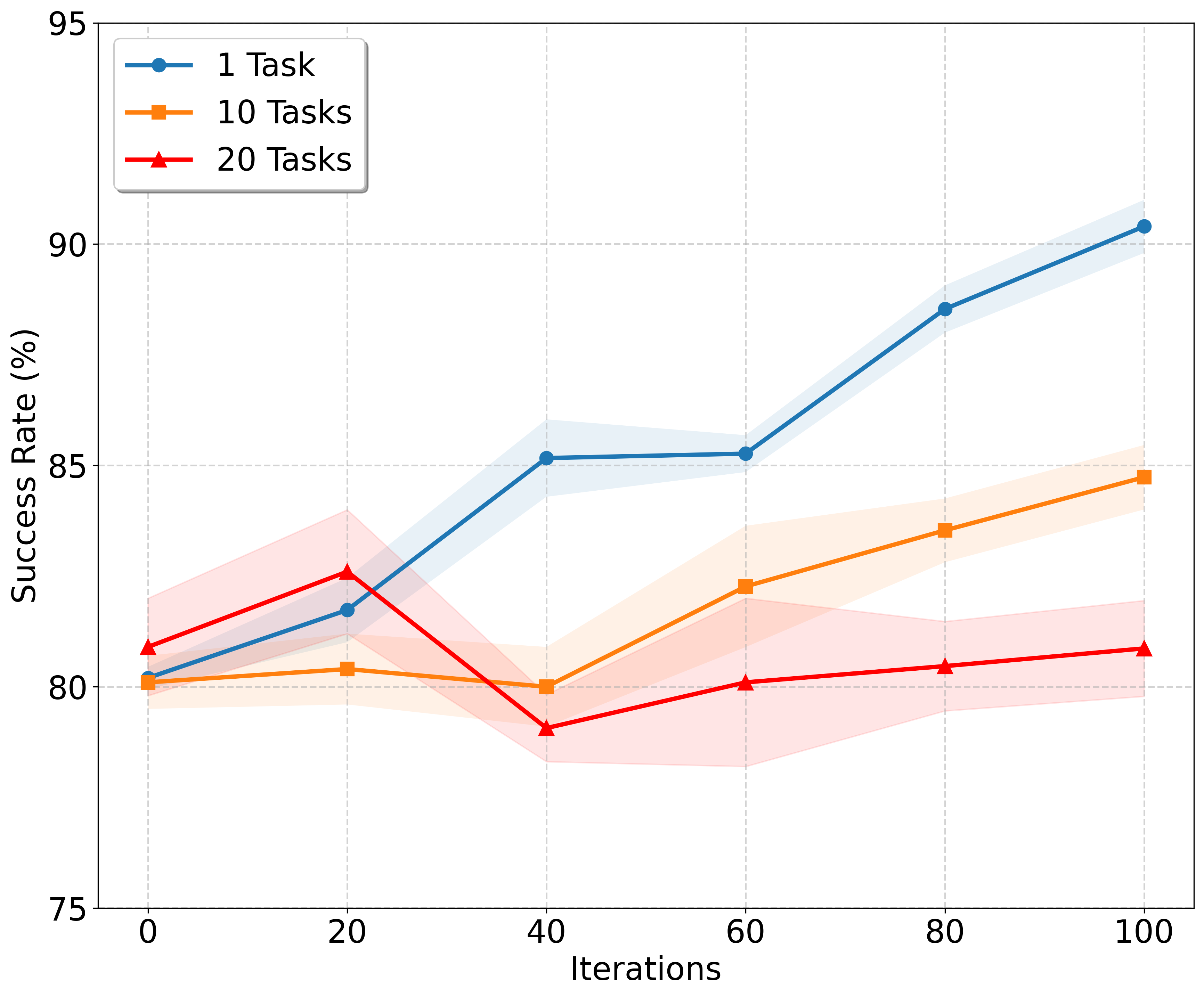}
        \caption{Multi-Task Conflict. As the number of tasks increases, RFT becomes unstable and increasingly ineffective.}
        \label{fig:task_conflict}
    \end{minipage}
\vspace{-0.2in}
\end{figure}

Delving into the phenomenon of catastrophic forgetting, we observe that a fundamental prerequisite for the scalability of VLA models is the development of a shared feature space capable of encoding useful information across diverse tasks.
However, during fine-tuning on a specific task, the RL agent often distorts this shared representation by isolating its representation of its experience from others. As shown in Figure~\ref{fig:tsne}, single-task RFT drives the tuned task's representations to form an isolated cluster, indicating a drift away from the shared feature space and reduced alignment with other similar tasks. It eventually results in a deterioration of VLA's cross-task competencies.

\begin{figure}[t]
  \centering
  \begin{subfigure}[t]{0.23\textwidth}
    \centering
    \includegraphics[width=\linewidth]{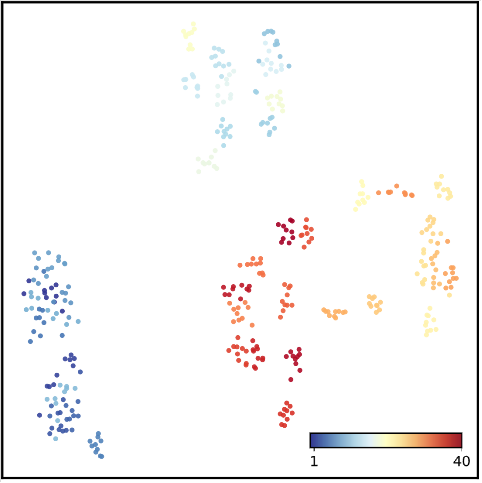}
    \caption{Before RFT}
  \end{subfigure}\hfill
   \begin{subfigure}[t]{0.23\textwidth}
    \centering
    \includegraphics[width=\linewidth]{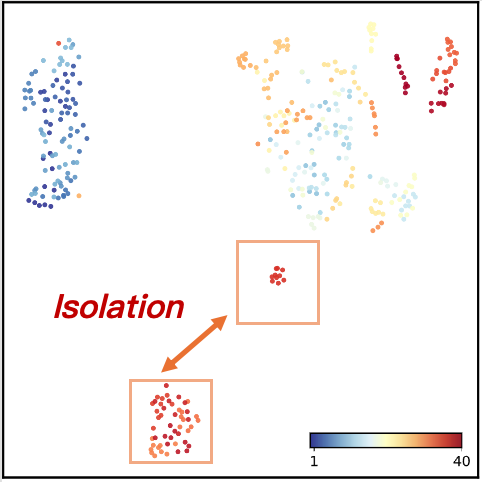}
    \caption{After RFT}
  \end{subfigure}\hfill
 \caption{\textbf{t-SNE visualization of the shared feature space on LIBERO.} We embed representations from 40 tasks (10 samples per task) before and after single-task RFT.}
  \label{fig:tsne}
\vspace{-20pt}
\end{figure}

\vspace{-4pt}
\noindent\textbf{Rethinking RL Post-training for VLA.}
Revisiting the objective of VLA models, the ultimate goal is to learn a generalist control policy capable of performing effectively across diverse tasks.
In practice, the variations among these tasks, including differences in time horizons, the number and types of target objects, and the visual, physical, and dynamic properties of the environments, are substantial~\citep{kroemer2021review,kawaharazuka2025vision}.
Given the inherently task-driven nature of RL optimizers, they are prone to overfitting to specific tasks and forgetting previously learned ones, as discussed above.
This tendency naturally introduces a trade-off between achieving high task success rates and maintaining cross-task scalability. This trade-off poses a significant challenge for the direct application of RL optimizers in VLA models. To study this effect, we perform RL post-training with a classic actor--critic algorithm, Soft Actor-Critic (SAC)~\cite{haarnoja2018soft}. We evaluate three multi-task regimes (3 random seeds): (i) a single task from the LIBERO Spatial suite, (ii) 10 tasks from LIBERO Spatial, and (iii) 20 tasks from LIBERO Spatial and LIBERO Goal.
As illustrated in Figure~\ref{fig:task_conflict}, the average success rate becomes increasingly unstable as the number of tasks grows during RL post-training.
In particular, when trained on 10 tasks, only marginal improvements are observed, whereas with 20 tasks, the average success rate declines sharply, underscoring the limitations of current multi-task RL optimizers.

In this study, unlike prior works \cite{chen2025conrft,lu2025vla,li2025simplevla} that focus on how RL optimizes the performance of individual tasks (i.e., for each tasks $\zeta$, $\max\mathbb{E}_{\mathcal{M}_\zeta}[\sum_{t=0}^\infty\gamma^t r_t]$), we examine the collective success rates of tasks as a group (i.e., for all $\zeta\sim p(\zeta)$, $\max\frac{1}{|\zeta|}\sum_{\zeta}\mathbb{E}_{\mathcal{M}_\zeta}[\sum_{t=0}^\infty\gamma^t r_t]$) and introduce a novel RL framework that effectively addresses the trade-off between accuracy and scalability in VLA post-training.

%% file: section/method.tex
\begin{figure*}[t]
    \centering

    \includegraphics[width=\textwidth]{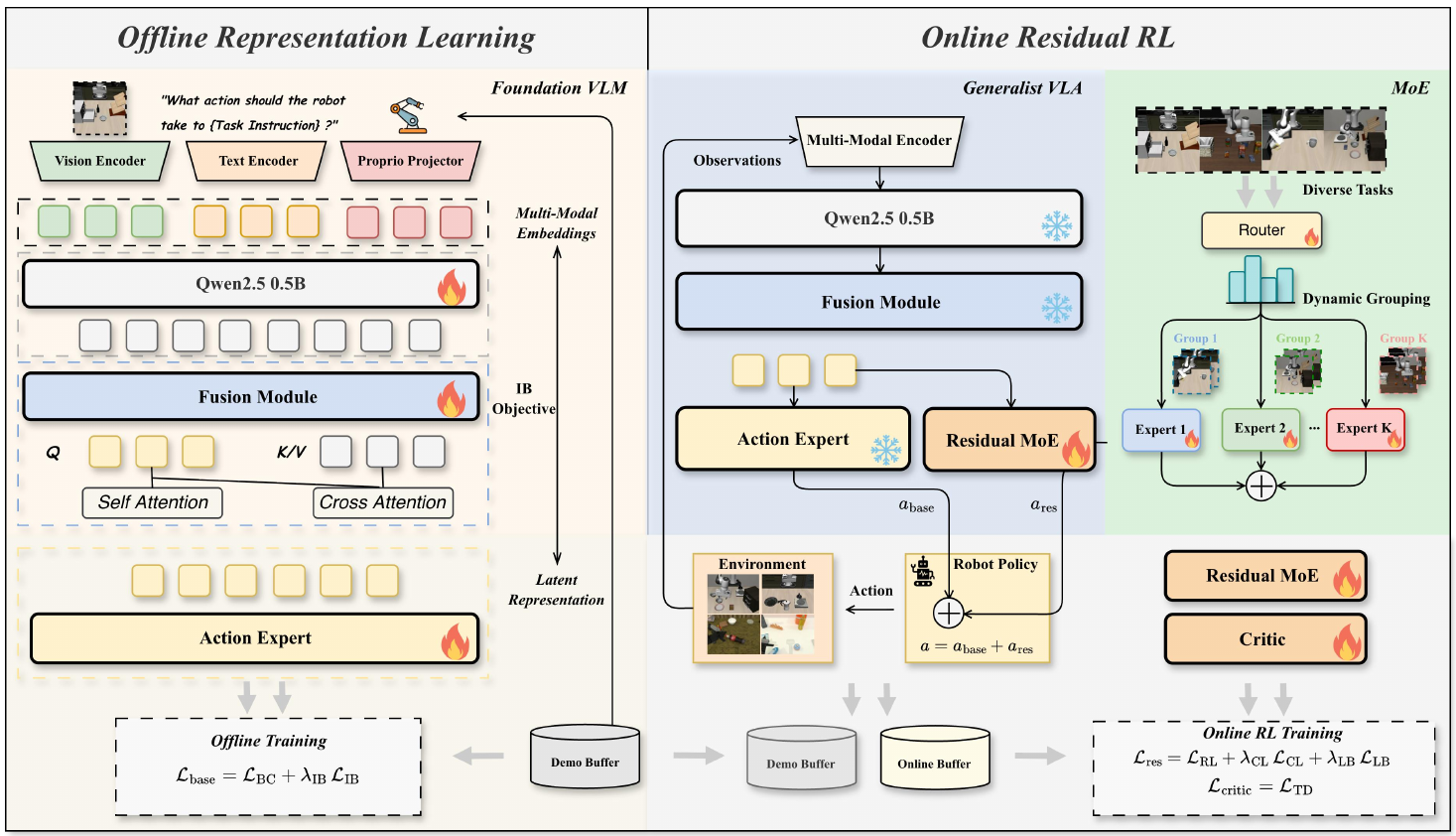}
    \caption{\textbf{Method pipeline.} \modelname~follows a two-stage training recipe.
    \textbf{1) Offline stage:} we train the VLA backbone to predict actions while learning a compact latent representation via an information-bottleneck (IB) objective.
    \textbf{2) Online stage:} we freeze the VLA backbone and optimize the residual MoE in online multi-task settings, serving as a residual compensation module on top of the base model to further improve overall multi-task ability and generalization.}
    \label{fig:pipeline}
\end{figure*}
\vspace{-4pt}

\section{\modelname}\label{headings}
To achieve scalable cross-task performance, we propose \modelname{}, a RL optimization framework for VLA models. 
Similar to recent VLA post-training methods~\cite{lu2025vla,li2025simplevla,li2025reinforcement}, \modelname{} follows an {\it offline pretraining}-to-{\it online fine-tuning} paradigm. However, unlike prior works targeting individual task performance, \modelname{} seeks to generalize across multiple tasks by 1) learning shared representations in pretraining and 2) optimizing the performance all tasks concurrently during post-training.
\vspace{-8pt}
\subsection{Offline Pre-Training for Cross-Task Representation}
\label{sec:offline}
The goal of pretraining is to learn a cross-task latent representation $p(z \mid o) $ that not only captures essential information from the observation $o$  but also facilitates the learning of the action model $\pi(a \mid z)$ across diverse tasks.

\textbf{Latent Feature Extraction.}
\modelname{} implements $ p(z \mid o) $ using a VLM backbone coupled with a fusion module to encode exteroception from sensors and commands, and to fuse these features with proprioceptive information.

At each time step $ t $, the backbone processes images from both wrist and third-view cameras $ \boldsymbol{\mathcal{X}}_t $ together with an instruction \( \mathcal{L}_t \). Visual observations are encoded using DINOv2~\citep{DINOv2-2024} and SigLIP~\citep{SigLIP-2023}, and subsequently fused with tokenized language representations through a Qwen2.5-0.5B LLM~\citep{Qwen2-2024}, yielding a sequence of hidden states. These hidden states are then partitioned into \emph{conditioning tokens} $ h_t^{\text{cond}} $ and \emph{action tokens} $ h_t^{\text{act}}$, where $ h_t^{\text{cond}} $ \emph{integrate} vision and language.

Upon completing the encoding stage, the fusion module initializes a set of learnable action-query tokens $q^a_t$ and fuses them with $h_t^{\text{cond}}$, $h_t^{\text{act}}$, and the proprioceptive input $p_t$ to produce $h_t^{\text{fuse}}$~\citep{wang2025vla}. Appendix~\ref{app:fusion} covers the details.

\textbf{Learning Task-Sharing Representation.} To effectively extract task-sharing features $\Tilde{z}_T$ from the fused latent features $h^{\text{fuse}}_t$, \modelname{} designs the learning objective based on the principle of Information Bottleneck (IB)~\cite{tishby2000information} and designs the learning objective as follows: 
\vspace{-2pt}
\begin{equation}
\max_{p_\theta(z|o)} \;
I(Z;A) - \lambda_{\mathrm{IB}}\, I(Z;O),\label{eq:ib-ori}
\end{equation}
where the probabilistic model $p_\theta(z|o)$ first extracts $h^{\text{fuse}}_t$ from $o$ and then transforms it into $\Tilde{z}_T$.
In essence, IB principle formalizes representation learning as a trade-off between \emph{compression} and \emph{task relevance}. 
Given an input observation $O$ and a latent representation $Z$, the IB framework seeks a stochastic encoder that retains information useful for downstream robot action prediction, such as spatial and morphological features of the target object, while discarding nuisance factors such as background and lighting conditions. This is achieved by maximizing $ \mathcal{I}(Z; A) $ ($\mathcal{I}$ denotes the mutual information between two variables) while minimizing $ \mathcal{I}(O; Z)$, weighted by $\lambda_{\mathrm{IB}}$ .

However, quantifying mutual information directly is intractable for high-dimensional inputs. Therefore, we employ a variational approximation to derive a lower bound of Objective~\ref{eq:ib-ori} and propose the following objective:

\begin{proposition}[Variational Information Bottleneck for Action Regression]
\label{thm:ib_action_dv}
Let $Z$ be a latent representation produced by an encoder $p_\theta(z|o)$, 
and let $\pi_\theta(a|z)$ denote a probabilistic action decoder.
Then, maximizing the IB objective (\ref{eq:ib-ori}) is equivalent to minimizing the following loss term:
\begin{align}
\mathcal{L}_{\text{base}}=&\mathbb{E}_{p_\theta(z|o)}
\!\big[-\log \pi_\theta(a|z)\big]
+\nonumber\\
&\lambda_{\mathrm{IB}}\!
\Big[
\mathbb{E}_{P_{OZ}}[T_\psi(o,z)]
-
\log \mathbb{E}_{P_O P_Z}\![e^{T_\psi(o,z)}]
\Big]
\label{eq:l_base_action_dv}
\end{align}
where 1) $T_\psi: \mathcal{O} \times \mathcal{Z} \rightarrow \mathbb{R}$ denotes a neural critic, and 2) $ P_{OZ} $ and $ P_O P_Z $ denote the joint distribution of $O$ and $Z$, as well as the product of their marginals.
\end{proposition}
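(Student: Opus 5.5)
The plan is to rewrite the two mutual-information terms in (\ref{eq:ib-ori}) one at a time. The relevance term becomes a variational negative log-likelihood, and the compression term becomes its Donsker--Varadhan (DV) form. Throughout, ``equivalent'' is read in the variational sense: the two objectives coincide up to additive constants that do not depend on $\theta$, and up to the slack of the variational families, which vanishes at their optima.

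First, flip the sign so that maximizing $I(Z;A)-\lambda_{\mathrm{IB}} I(Z;O)$ becomes minimizing $-I(Z;A)+\lambda_{\mathrm{IB}} I(Z;O)$. The joint law is taken as $p(o,a)\,p_\theta(z|o)$, with the Markov chain $A\leftrightarrow O\leftrightarrow Z$.

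For the relevance term, expand
\[
I(Z;A)=\mathbb{E}[\log p(a|z)]+H(A).
\]
The entropy $H(A)$ depends only on the data, so it drops out as a constant. Then introduce the decoder $\pi_\theta(a|z)$. Nonnegativity of $D_{\mathrm{KL}}(p(a|z)\,\|\,\pi_\theta(a|z))$ gives
\[
\mathbb{E}[\log p(a|z)]\ge \mathbb{E}_{p(o,a)p_\theta(z|o)}[\log \pi_\theta(a|z)],
\]
with equality when $\pi_\theta$ matches the true posterior. Hence $-I(Z;A)$ is replaced by the cross-entropy term $\mathbb{E}_{p_\theta(z|o)}[-\log\pi_\theta(a|z)]$, up to a constant. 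Minimizing jointly over the decoder closes the gap, which is what justifies calling the two objectives equivalent rather than merely bounded.

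For the compression term, write
\[
I(Z;O)=D_{\mathrm{KL}}(P_{OZ}\,\|\,P_OP_Z)
\]
and invoke the DV representation
\[
D_{\mathrm{KL}}(P\|Q)=\sup_T\big\{\mathbb{E}_P[T]-\log\mathbb{E}_Q[e^T]\big\}.
\]
The supremum is attained at $T^*=\log\frac{dP}{dQ}+\text{const}$. Restricting $T$ to the parametric family $T_\psi$ gives a lower bound that is tight whenever the family contains $T^*$. Substituting this expression yields exactly the bracketed term in (\ref{eq:l_base_action_dv}). Combining the two pieces gives $\mathcal{L}_{\mathrm{base}}$, to be optimized as a saddle problem: minimize over $\theta$ and maximize over $\psi$, which is exactly what Algorithm~\ref{algo:vla} does in practice.

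The main obstacle is the compression term, because the two approximations point in opposite directions. The decoder substitution makes $\mathcal{L}_{\mathrm{base}}$ an upper bound on $-I(Z;A)$, but the DV substitution makes the $\lambda_{\mathrm{IB}}$ term a lower bound on $I(Z;O)$. So $\mathcal{L}_{\mathrm{base}}$ at a fixed suboptimal $\psi$ is not a one-sided bound on the IB objective. I would handle this by stating the equivalence at the inner optimum over $\psi$, assuming the critic class is rich enough to contain $T^*$ (or is dense in the relevant function space). At that optimum the compression term equals $I(O;Z)$ exactly, and the min over $\theta$ of $\mathcal{L}_{\mathrm{base}}$ matches the IB optimum up to the decoder gap, which vanishes at the decoder optimum.
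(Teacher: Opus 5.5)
Your proposal is correct and follows the same route as the paper's proof in Appendix~\ref{app:proof}: sign flip, expansion of $I(Z;A)$ with the $\theta$-independent $\mathbb{E}[\log p(a)]$ dropped, substitution of the decoder $\pi_\theta(a|z)$ for $p(a|z)$, and the Donsker--Varadhan representation of $I(Z;O)=D_{\mathrm{KL}}(P_{OZ}\|P_OP_Z)$. It is more careful than the paper, which writes the decoder substitution as an equality and drops the supremum over $T_\psi$ in step (4) with only a constant $\mathds{C}$, whereas you justify the former by $D_{\mathrm{KL}}(p(a|z)\,\|\,\pi_\theta(a|z))\ge 0$ and rightly note that, because the two bounds point in opposite directions, the equivalence holds only for the min--max problem at the decoder and critic optima; one correction, though: Algorithm~\ref{algo:vla} as written performs joint gradient descent on $(\theta,\nu)$, critic included, so drop your remark that it implements this saddle point ``exactly''.
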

Appendix~\ref{app:proof} shows the proof.
In practice, we estimate the $\mathbb{E}_{P_{OZ}}[T_\psi(o,z)]$ using paired samples in each mini-batch, and approximate sampling from the product of marginals $ P_O P_Z $ by randomly permuting $z$ within the same mini-batch (pairing $o_i$ with $z_{\pi(i)}$ for a permutation $\pi$)~\cite{belghazi2018mutual}.
For stability, we reuse the same permutation within each optimization step.

Finally, to enable accurate action prediction, we parameterize the policy $\pi_\theta(a\mid z)$ using the action head architecture from prior work~\cite{OpenVLA-OFT-2025}. Given the fused latent representation $z\in\mathcal{Z}$, the action head maps it to continuous control commands and outputs an action chunk of horizon $h$ for execution. 
Formally, the base policy is a mapping $\pi_\theta^{\text{base}}:\mathcal{Z}\rightarrow \mathcal{A}^h$.

Accordingly, the maximum likelihood term in objective~(\ref{eq:l_base_action_dv}) can be approximated by regressing vector fields of fixed conditional
probability path.

By combining $\pi_\theta^{\text{base}}(\boldsymbol{a}\mid z)$ and $p_\theta(z \mid o)$, we obtain a shared base policy trained on offline demonstrations to acquire general multi-task competence.

\subsection{Online Finetuning via Mixture of RL Residuals}

Upon learning the task-sharing representation $p_\theta(z \mid o)$, we proceed to optimize the action prediction model $\pi_\theta^{\text{base}}(\boldsymbol{a} \mid z)$ using an RL optimizer. However, as discussed in Section~\ref{sec:problem-formulation}, although prior RL fine-tuning methods enhance the VLA's dexterity on individual tasks, these improvements come at the cost of distorting the learned task-sharing representation. Such distortion compromises the VLA model's ability to scale effectively across multiple tasks.

To address the trade-off between control dexterity and cross-task scalability, \modelname{} employs a dynamic Mixture-of-RL-Residuals (MoRR) for multi-task RL optimization.

\textbf{Learning Task Embeddings.}
\modelname{} learns multiple RL policies and leverages their combination to optimize task performance.
A key prerequisite for this process is determining which policies should be involved to handle a given task.
To this end, we develop a task embedding denoted as $\Tilde{z}_T$.
Notably, unlike the latent representation $p_\theta(z \mid o)$, which primarily captures decision-relevant patterns, the task embedding $\Tilde{z}_T$ is designed to encode task-identification patterns.
During training, task IDs are used only as supervision to shape the embedding space, while the router is conditioned on $\Tilde{z}_T$ rather than directly consuming explicit environment labels (e.g., task IDs, names, or object categories).
This design encourages the task embedding to capture semantic task features and allows \modelname{} to select appropriate experts from the learned routing space at inference time.

Inspired by contrastive learning~\citep{oord2018representation,wu2025mixture}, we design the following loss function:
\begin{equation}
\mathcal{L}_{\mathrm{CL}}(\psi)
=
- \mathbb{E}_{(\Tilde{z}_T, \zeta)}
\left[
\log
\frac{\exp\!\left(\mathrm{sim}(\Tilde{z}_T, e_{\zeta}) / \tau\right)}
{\sum_{\zeta'=1}^{N} \exp\!\left(\mathrm{sim}(\Tilde{z}_T, e_{\zeta'}) / \tau\right)}
\right],
\label{eq:task_cl}
\end{equation}
where $N$ is the number of training tasks, $e_{\zeta} = \mathrm{Emb}(\zeta)$ denotes the learnable task prototype associated with task ID $\zeta$, $\tau$ is a temperature hyperparameter, and $\mathrm{sim}(\mathbf{a}, \mathbf{b}) = \mathbf{a}^\top \mathbf{b}$ is the similarity function. In practice, the MoE router is conditioned on the task embedding $\Tilde{z}_T$, while the task prototypes $e_{\zeta}$ are only used during contrastive training to shape the embedding space and are not directly involved in routing decisions.

\textbf{Mixture of Residuals with Dynamic Routing.} Based on the task embeddings , \modelname{} utilizes a Mixture-of-Experts (MoE) module ~\cite{jacobs1991adaptive,shazeer2017outrageously} to refine the action chunk across multiple tasks.

Specifically, given a task embedding $\Tilde{z}_T$, a router $h(\cdot)$ produces gating logits over $N$ residual experts $\{\pi^\Delta_i\}_{i=1}^{N}$ and activates only the top-$m$ of them. The corresponding gating weights are then used to combine the outputs of the selected experts as follows:
\begin{align}\label{eq:res_action}
    \Delta\boldsymbol{a} = \sum_{i=1}^{m}\omega_i(\Tilde{z}_T)\, \Tilde{\boldsymbol{a}}_{i} \;\;\text{s.t.} \;\;\Tilde{\boldsymbol{a}}_{i}\sim\pi^\Delta_{i,\psi}(\Tilde{\boldsymbol{a}}_{i}\mid z,\boldsymbol{a}_{\mathrm{base}})
\end{align}
where 1) $\{\omega_i(\Tilde{z}_T)\}_{i=1}^{m}$  are the top $m$ gating weights predicted by the router and 2) $\Tilde{\boldsymbol{a}}_{i}$ is the residual chunk predicted by the $i$-th expert, conditioned on the latent features $z$ and the base action chunk $\boldsymbol{a}_{\mathrm{base}}$ (predicted the pre-trained policy $\pi_\theta^{\text{base}}$).
This residual parameterization preserves the base policy as a strong prior while enabling specialized experts to perform targeted corrections.

\noindent\textbf{Learning the Residual RL Policies.} 
Unlike prior residual-policy methods that attach an external residual controller to a frozen base policy~\cite{xiao2025self,yuan2024policy}, we integrate a residual MoE into the base architecture by conditioning each expert on the same VLM latent representation used by the base action head. This avoids introducing a separate perception encoder for the residual branch, improving compute efficiency and reducing redundant re-encoding. The RL loss of residual policy is formulated as:
\vspace{-4pt}
\begin{equation}
\label{eq:actor_sac}
\begin{aligned}
\mathcal{L}_{\mathrm{RL}}(\psi)
&=
\mathbb{E}
\Bigg[
\log \pi_{\psi,\theta}(\boldsymbol{a}_t \mid s_t)
-
\frac{1}{K}\sum_{k=1}^{K} Q_{\theta_k}(s_t,\boldsymbol{a}_t)
\Bigg].
\end{aligned}
\end{equation}
where 1) $\pi_{\psi,\theta}$ denotes the augmented policy that merges the predicted action refinement $\Delta\mathbf{a}$ (Eq.~(\ref{eq:res_action})) with action samples from the base policy $\pi^{\text{base}}_\theta(a \mid z)$ such that
$\forall\boldsymbol{a}\sim\pi_{\psi,\theta}, \boldsymbol{a}=\Delta\boldsymbol{a}+\boldsymbol{a}_{\text{base}}
$
; and 2) $Q_{\theta_k}(s_t,\boldsymbol{a}_t)$ denotes the $k$-th critic in a $K$-ensemble of action-value functions, parameterized by $\theta_k$. The critic objective is detailed in Appendix~\ref{app:residual}.

To mitigate routing instability and prevent expert collapse in the MoE policy, we regularize the gating distribution to encourage balanced expert utilization.
Let $\hat{\omega}_n \in \Delta^{N-1}$ denote the predicted router distribution for the $n^{th}$ task embedding $\Tilde{z}_{T,n}$ during training, and let $\hat{\omega}_{\text{avg}}=\frac{1}{B}\sum_{n=1}^{B} \hat{\omega}_n$ be the mini-batch average ($B$ is the minibatch size).
We introduce an entropy-based load-balancing regularizer as:
\vspace{-4pt}
\begin{equation}
\mathcal{L}_{\mathrm{LB}}(\psi)
=
\sum_{i=1}^{N} \hat{\omega}_{i,\text{avg}} \log(\hat{\omega}_{i,\text{avg}}+\epsilon),
\end{equation}
where 
$\epsilon$ is a small constant for numerical stability.
The \emph{sparse} mixture weights $\omega_i(\cdot)$ used in Eq.~\eqref{eq:res_action} are obtained by selecting top-$m$ $\hat{\omega}_i$ followed by renormalization.
Minimizing $\mathcal{L}_{\mathrm{LB}}$ is equivalent to maximizing the entropy of the routing probabilities, thereby preventing collapse to a single residual policy.
We add $\mathcal{L}_{\mathrm{LB}}$ to the policy objective. Therefore, the overall training objective of MoRR policy is:
\begin{equation}
\label{eq:policy}
\mathcal{L}_{\pi}(\psi)
=
\mathcal{L}_{\mathrm{RL}}(\psi)
+ \lambda_{\mathrm{CL}}\,\mathcal{L}_{\mathrm{CL}}(\psi) +\lambda_{\mathrm{LB}}\,\mathcal{L}_{\mathrm{LB}}(\psi),
\end{equation}
\vspace{-20pt}

where $\lambda_{\mathrm{CL}}$ and $\lambda_{\mathrm{LB}}$ control the weights of the corresponding loss terms.

\noindent\textbf{Difficulty-Aware Sampling.} Inspired by \citep{lu2025vla,florensa2017reverse,florensa2018automatic}, we bias task sampling toward under-solved tasks. Specifically, given the empirical success rate $\text{sr}_j$ of task $\zeta_j$, we define
$P(\zeta_j)\propto \exp\!\big(\max(0,\,target-\text{sr}_j)/\tau\big)$, which increases the sampling probability only when $\text{sr}_j < \textit{target}$ and saturates otherwise. This asymmetric design focuses training on partially solvable yet still challenging tasks while retaining minimal exposure to mastered tasks.

\textbf{Practical Implementation.}
In practice, we maintain two replay buffers: an offline buffer $\mathcal{D}_{\text{offline}}$ and an online buffer $\mathcal{D}_{\text{online}}$.
Following RLPD-style sampling~\citep{ball2023efficient}, we uniformly mix offline and online transitions for each gradient update.
To ensure stable initialization of online residual RL, we warm up both the critic and policy prior to online interaction, as in Cal-QL~\cite{nakamoto2023cal}.
Pseudocode are provided in Appendix~\ref{app:pseudocode}.

%% file: section/exp.tex
\vspace{-4pt}
\section{Experiments}
\label{others}

\subsection{Experiments Settings.} 
\noindent\textbf{Benchmark.} 
1) LIBERO~\cite{liu2023libero} is a lifelong learning benchmark with five suites spanning 130 tasks. \emph{LIBERO-Spatial} evaluates spatial generalization by varying object placements; \emph{LIBERO-Object} tests object generalization by placing different objects into a box; \emph{LIBERO-Goal} measures diverse operations in a fixed environment; and \emph{LIBERO-Long} contains ten long-horizon tasks across varied scenes. 2) RoboTwin2.0~\cite{chen2025robotwin} is a dual-arm manipulation benchmark designed for cross-embodiment evaluation. We select four representative tasks to assess real-world transfer performance and practical deployability.

\textbf{Baseline.}
To assess each model's multi-task capability, we train all baseline methods on a unified training set formed by mixing data from all four suites with full shot data.
To evaluate scalability and efficiency across model scales, we consider both generalist large VLA models and light-weight multi-task policies.
Specifically, we include five recent large-scale VLA baselines reported in Table~\ref{tab:libero}, including Octo~\cite{team2024octo}, OpenVLA~\cite{kim2024openvla}, SpatialVLA~\cite{qu2025spatialvla}, $\pi_0$-FAST*~\cite{pertsch2025fast}, and $\pi_0$~\cite{black2024pi_0}. These models are fine-tuned from their released pretrained checkpoint when applicable.
In addition, we evaluate two representative light-weight baselines, Diffusion Policy~\cite{chi2023diffusion} and MT-ACT~\cite{bharadhwaj2023roboagent}.
All baselines are trained under the same mixed-suite protocol for a fair comparison. To provide an initial sanity check for our approach, we also train the base \modelname{} policy using supervised fine-tuning (SFT) under the same setting.
Following common practice, we adopt LoRA~\cite{hu2022lora} with rank 64 for this SFT baseline.

\subsection{Main Results}

\textbf{Results Analysis.}
Table~\ref{tab:libero} reports success rates under the 4-suite co-training setting, evaluated on each LIBERO suite.
Overall, \modelname{} 
achieves leading or comparable performance across all suites.
Compared with the offline SFT base model, our reinforcement fine-tuning consistently improves Spatial, Object, and Goal performance, and yields a clear average gain (92.7\% $\rightarrow$ 97.1\%).
Notably, the largest improvement appears on the most challenging suite, \textit{LIBERO-Long}, where success increases from 85.2\% to 95.0\%, indicating substantially enhanced long-horizon robustness.
\modelname{} remains competitive with prior multi-task VLA models under the same multi-task protocol.

\definecolor{gainrow}{RGB}{220,235,252}
\definecolor{gainred}{RGB}{220,0,0}

\newcommand{\gain}[1]{\textcolor{gainred}{\textbf{#1}}}

\begin{table*}[t]
    \centering
    \small
    \setlength{\tabcolsep}{7pt}
    \renewcommand{\arraystretch}{1.05}
    \caption{\textbf{LIBERO.} Success rate (\%) on LIBERO Benchmark. \textbf{Best} results are in \textbf{bold} and \textbf{second-best} results are \underline{underlined}.}
    \label{tab:libero}
    \begin{tabularx}{\textwidth}{>{\raggedright\arraybackslash}X|cccc|c}
        \toprule
        \textbf{Method} & \textbf{Spatial} & \textbf{Object} & \textbf{Goal} & \textbf{Long} & \textbf{Avg.} \\
        \midrule
        \multicolumn{6}{l}{\textbf{Multi-task SFT Models}}\\
        Diffusion Policy (From Scratch)~\cite{chi2023diffusion} & 59.6 & 73.8 & 51.6 & 41.0 & 56.5 \\
        MT-ACT (From Scratch)~\cite{bharadhwaj2023roboagent} & 50.2 & 72.0 & 60.2 & 50.2 & 58.2 \\
        Octo~\cite{team2024octo} & 78.7 & 85.5 & 84.2 & 51.0 & 74.9 \\
        OpenVLA~\cite{kim2024openvla} & 82.1 & 87.3 & 77.4 & 51.8 & 74.7 \\
        SpatialVLA~\cite{qu2025spatialvla} & 88.2 & 89.9 & 78.6 & 55.5 & 78.1 \\
        $\pi_0$-FAST*~\cite{pertsch2025fast} & 96.4 & \underline{96.8} & 88.6 & 60.2 & 85.5 \\
        $\pi_0$~\cite{black2024pi_0} & \textbf{98.0} & \underline{96.8} & \underline{94.4} & \underline{88.4} & \underline{94.4} \\
        \modelname{} (SFT) & 95.4 & 96.0 & 93.8 & 85.0 & 92.6 \\
        \midrule
        \multicolumn{6}{l}{\textbf{Multi-task RFT Models}}\\
        \modelname{} (Offline) & 95.6 & 96.0 & 94.0 & 85.2 & 92.7 \\
        \textbf{\modelname{}} & \underline{97.6} & \textbf{98.6} & \textbf{97.2} & \textbf{95.0} & \textbf{97.1} \\
        \rowcolor{gainrow}
        $\Delta$ & \gain{+2.0} & \gain{+2.6} & \gain{+3.2} & \gain{+9.8} & \gain{+4.4} \\
        \bottomrule
    \end{tabularx}
    \vspace{-10pt}
\end{table*}

\begin{table}[t]
    \centering
    \caption{\textbf{RFT Comparisons.} Success rate (\%) on LIBERO-Long.}
    \label{tab:libero10_rft}
    \small
    \setlength{\tabcolsep}{6pt}
    \renewcommand{\arraystretch}{1.12}
    \begin{tabular}{l c}
        \toprule
        \textbf{Method} & \textbf{LIBERO-Long} \\
        \midrule
        TGRPO~\cite{chen2025tgrpo}                    & 59.2 \\
        GRAPE~\cite{zhang2024grape}                  & 57.2 \\
        VLA-RL~\cite{lu2025vla}                      & 59.8 \\
        World-Env~\cite{xiao2025world}               & 57.8 \\
        RIPT-VLA~\cite{tan2025interactive}           & 93.8 \\
        SimpleVLA-RL~\cite{li2025simplevla}          & 91.7 \\
        RLinf~\cite{zang2025rlinf}                   & 94.0 \\
        \midrule
        \textbf{\modelname}                          & \textbf{95.2} \\
        \bottomrule
    \end{tabular}
\vspace{-20pt}
\end{table}

\noindent\textbf{Reinforcement Fine-Tuning (RFT) Baseline Comparisons.}
We evaluate \modelname{} against representative RFT baselines on \textit{LIBERO-Long}
(Table~\ref{tab:libero10_rft}) to characterize post-training performance.
Since most prior RFT methods are tailored to single-task or single-suite training and are not readily
applicable to joint co-training across all four LIBERO suites, a direct four-suite comparison would be
confounded by mismatched training assumptions.
We therefore adopt a suite-wise evaluation protocol and report results consistently across suites. The results show \modelname{} achieves the best performance with a success rate of 95.2\%.

\begin{table*}[h]
    \centering
    \setlength{\tabcolsep}{6pt}
    \caption{\textbf{RoboTwin2.} Success rate (\%) on RoboTwin2 Benchmark. \textbf{Best} results are in \textbf{bold} and \textbf{second-best} results are \underline{underlined}.}
    \resizebox{\textwidth}{!}{
    \begin{tabular}{l|cc cc cc cc|cc}
        \toprule
        \multirow{2}{*}{Method}
        & \multicolumn{2}{c}{Beat Block Hammer}
        & \multicolumn{2}{c}{Pick Dual Bottles}
        & \multicolumn{2}{c}{Stack Bowls Two}
        & \multicolumn{2}{c}{Place Empty Cup}
        & \multicolumn{2}{c}{Avg.} \\
        \cmidrule(lr){2-3}\cmidrule(lr){4-5}\cmidrule(lr){6-7}\cmidrule(lr){8-9}\cmidrule(lr){10-11}
        & Sim & Real & Sim & Real & Sim & Real & Sim & Real & Sim & Real \\
        \midrule
        OpenVLA-oft (SFT)~\cite{OpenVLA-OFT-2025}
            & 54.0 & 15.0 & 32.0 & 25.0 & 88.0 & 70.0 & 50.0 & \underline{60.0} & 56.0 & 42.5 \\
        OpenVLA-oft (RFT)~\cite{li2025simplevla}
            & \underline{71.9} & \textbf{30.0} & \underline{54.0} & \textbf{40.0} & \textbf{92.0} & \textbf{90.0} & \underline{96.1} & \underline{60.0} & \underline{78.5} & \underline{55.0} \\
        \midrule
        \modelname~
            & \textbf{72.2} & \textbf{30.0}
            & \textbf{57.0} & \textbf{40.0}
            & \underline{90.4} & \textbf{90.0}
            & \textbf{97.0} & \textbf{70.0}
            & \textbf{79.2} & \textbf{57.5} \\
        \bottomrule
    \end{tabular}}
    \label{tab:robotwin2}
\end{table*}

\textbf{Conflict Mitigation via Dynamic Grouping.}
We analyze gradient conflicts using four representative tasks with contrasting interaction semantics, consisting of two \textit{Open} tasks and two \textit{Close} tasks (Open I/II, Close I/II).
We quantify inter-task interference via pairwise cosine similarity of per-task gradients~\cite{yu2020gradient}, comparing a single MLP residual head against our MoRR.
As shown in Figure~\ref{fig:grad_conflict}, gradients are positively aligned within the \textit{Open} and \textit{Close} pairs, but exhibit negative correlations across \textit{Open}$\leftrightarrow$\textit{Close}, indicating strong interference under a shared MLP head.
MoRR reduces these negative similarities while maintaining within-group alignment, suggesting that MoE routing with dynamic grouping effectively mitigates cross-task conflicts.

\begin{figure}[t]
    \centering
    \begin{minipage}{0.51\columnwidth}
        \centering
        \includegraphics[width=\columnwidth]{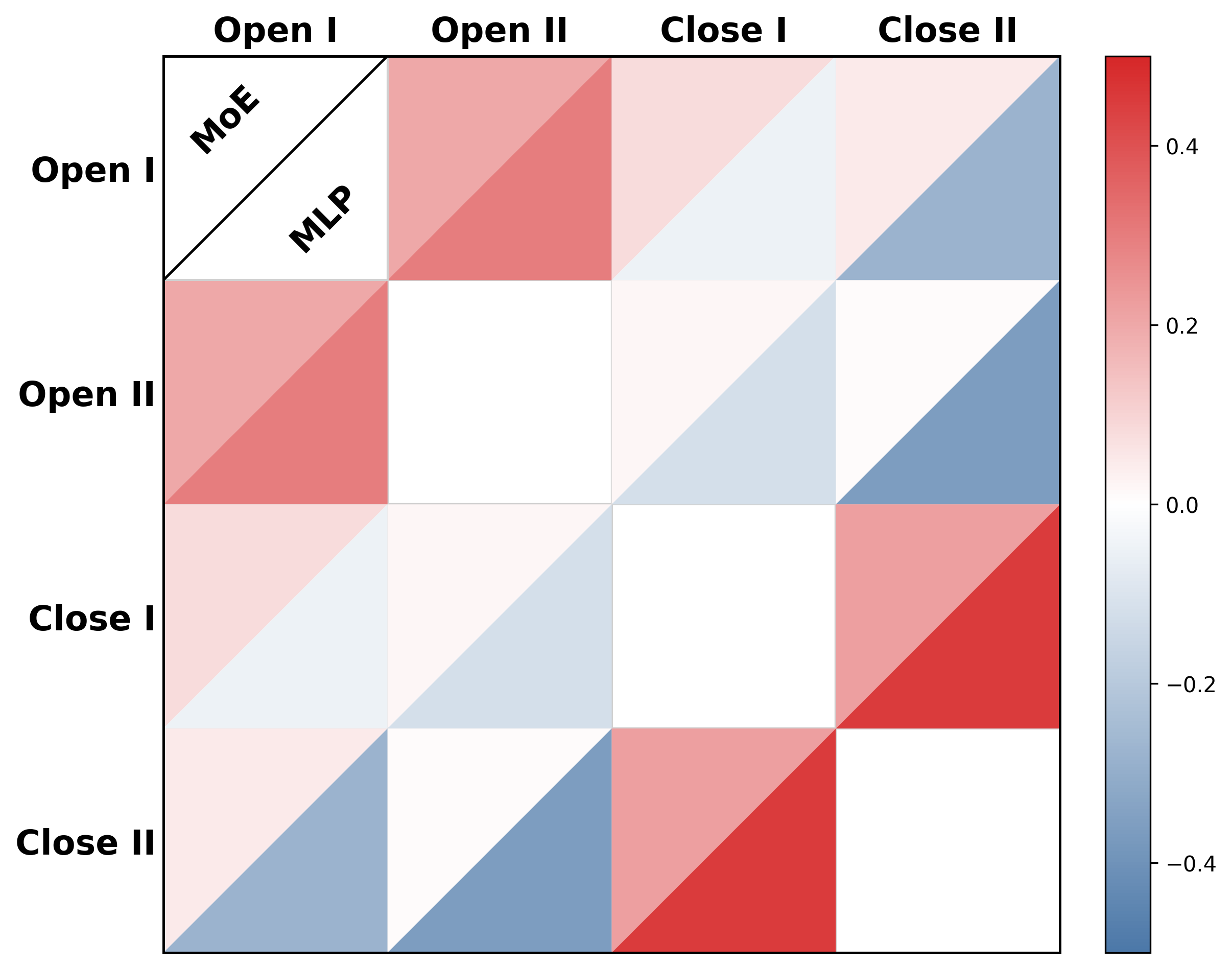}
        \captionof{figure}{\textbf{Gradient Conflicts.} Pairwise cosine similarity between per-task gradients. Red indicates aligned gradients (synergy) and blue indicates conflicting gradients.}       \label{fig:grad_conflict}
    \end{minipage}\hfill
    \begin{minipage}{0.46\columnwidth}
        \centering
        \vspace{12pt}  \includegraphics[width=\columnwidth]{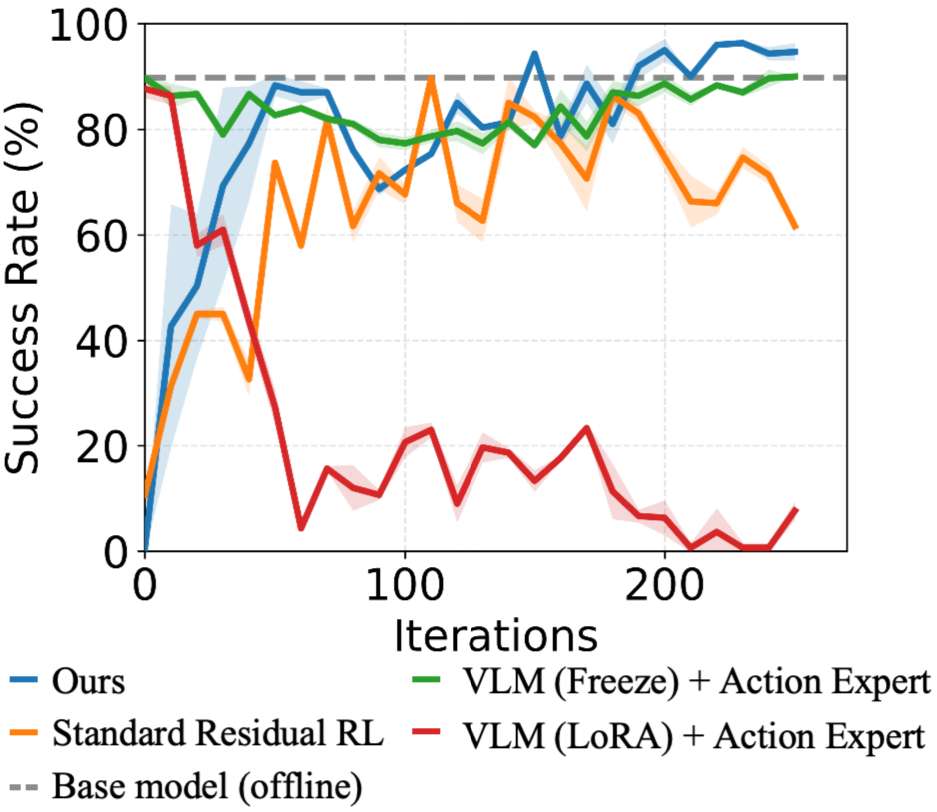}

        \captionof{figure}{\textbf{Online Training Strategy Ablation.} Success rate versus online fine-tuning iterations for different strategies. Curves are averaged over 3 seeds; the dashed line denotes the offline base model.}
        \label{fig:online_ablation}
    \end{minipage}
    \vspace{-20pt}  
\end{figure}

\subsection{Ablation Study}
\textbf{Expert Number Ablation.}
To verify whether scaling the MoRR improves multi-task performance, we vary the number of experts and report results in Table~\ref{tab:group_num}.
Increasing experts consistently boosts success: a single expert already reaches 93.4 average success, 4 experts improve to 97.0 with gains across all suites, and 8 experts achieve the best overall average performance (97.1 avg.), topping \textit{Spatial} and \textit{Goal} while tying the best \textit{Long} result.
Notably, the largest gain is on long-horizon tasks, suggesting that additional experts enable better specialization and mitigate cross-task interference.

\begin{table}[t]
    \centering
    \caption{\textbf{Expert Number Ablation.} Success rate (\%) on LIBERO.}
    \label{tab:group_num}
    \scriptsize
    \setlength{\tabcolsep}{3pt}
    \renewcommand{\arraystretch}{1.05}

    \begin{tabular*}{\columnwidth}{@{\extracolsep{\fill}} l|cccc|c}
        \toprule
        Method & Spatial & Object & Goal & Long & Avg. \\
        \midrule
        \modelname{} (1) & 95.2 & 96.4 & 93.6 & 88.2 & 93.4 \\
        \modelname{} (4) & 97.4 & \textbf{98.8} & 96.8 & 95.0 & 97.0 \\
        \modelname{} (8) & \textbf{97.6} & 98.6 & \textbf{97.2} & \textbf{95.0} & \textbf{97.1} \\
        \bottomrule
    \end{tabular*}
\end{table}

\begin{table}[t]
    \centering
    \caption{\textbf{Online Training Ablation.} Success rate (\%) on LIBERO.}
    \label{tab:libero_conp}
    \scriptsize
    \setlength{\tabcolsep}{3pt}
    \renewcommand{\arraystretch}{1.05}

    \begin{tabular*}{\columnwidth}{@{\extracolsep{\fill}} l|cccc|c}
        \toprule
        Method & Spatial & Object & Goal & Long & Avg. \\
        \midrule
        \modelname{} & \textbf{97.6} & \textbf{98.6} & \textbf{97.2} & \textbf{95.0} & \textbf{97.1} \\
        \midrule
        w/o IB Objective & 97.4 & 98.2 & 97.0 & 94.6 & 96.8 \\
        w/o CL Objective & 95.2 & 97.2 & 94.2 & 90.4 & 94.3 \\
        w/o DA Sampling & 97.4 & 98.4 & 97.0 & 93.0 & 96.4 \\
        \bottomrule
    \end{tabular*}
\vspace{-10pt}
\end{table}

\textbf{Component Ablation.} To assess the impact of each component in \modelname{}, we perform a component-wise ablation (Tab.~\ref{tab:libero_conp}) by removing one term at a time while keeping all other settings fixed. Removing IB leads to a consistent drop in performance ($97.1\!\rightarrow\!96.8$), indicating that IB provides a reliable regularization benefit. Additionally, removing the InfoNCE-based contrastive learning objective leads to a pronounced degradation across suites, most notably on Long ($95.0\!\rightarrow\!90.4$), reducing the overall average to $94.3$, which underscores the role of discriminative task embeddings for stable routing and multi-task generalization.
Disabling difficulty-aware sampling also reduces performance ($97.1\!\rightarrow\!96.4$).
Overall, the full model performs best, and these components contribute complementary gains.

\textbf{Online Training Strategy Ablation.} To examine the effect of online fine-tuning strategies, we conduct an ablation in which all methods are initialized from the same offline base model and evaluated over 3 seeds (Fig.~\ref{fig:online_ablation}). \modelname{} achieves the highest success rate and the most stable improvement. In comparison, \emph{Standard Residual RL} exhibits large oscillations and ends with noticeably lower performance. Freezing the VLM in \emph{VLM (Freeze) + Action Expert} yields stable but modest gains, indicating limited adaptation capacity. Finally, \emph{VLM (LoRA) + Action Expert} exhibits severe performance collapse, indicating that directly updating the backbone under large-scale multi-task online RL can substantially destabilize optimization.

\subsection{Real-world Experiments}

\begin{figure}[h]
  \centering  \includegraphics[width=0.9\linewidth]{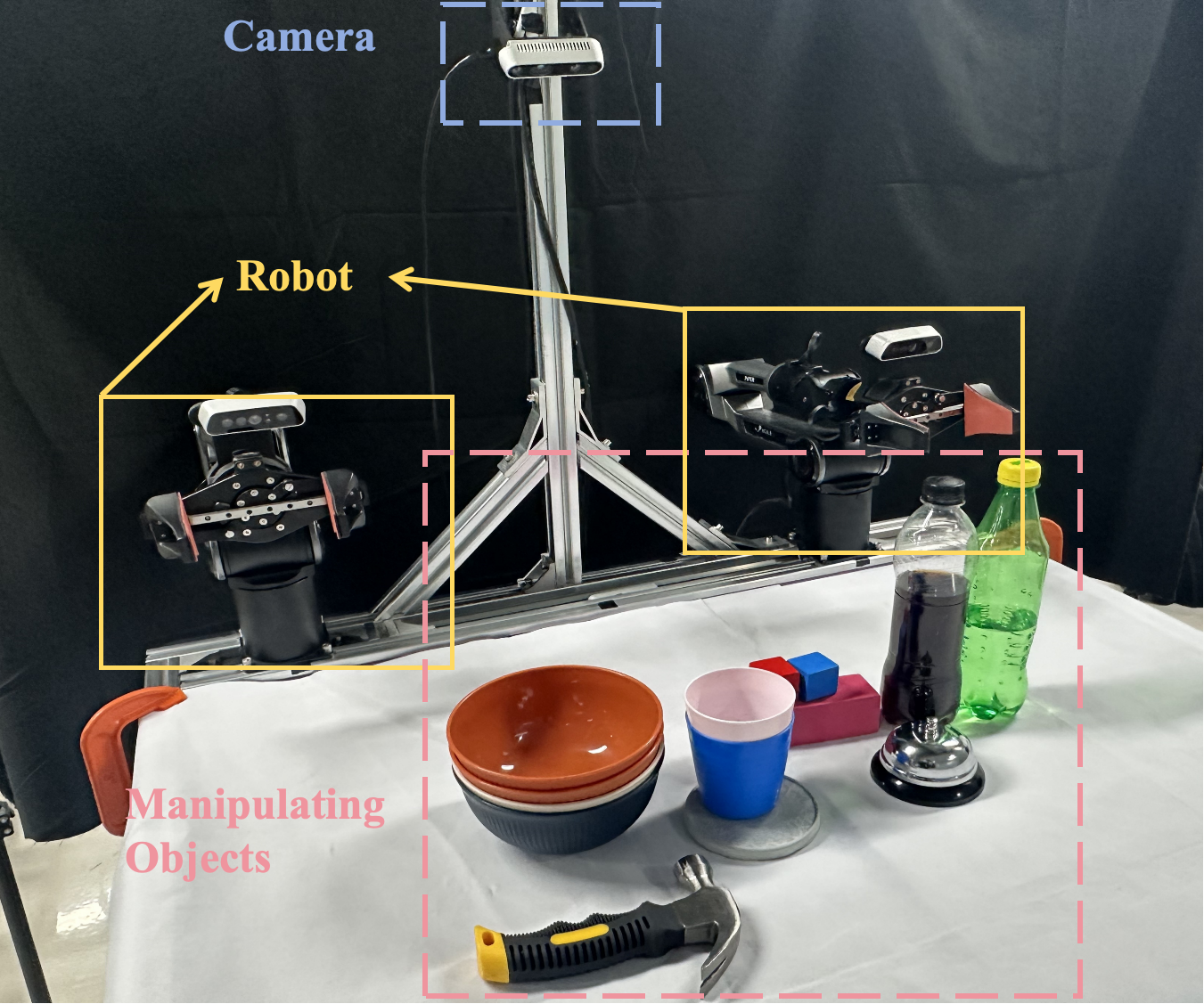}
  \caption{Real-world Platform for robotic manipulation.}
  \label{fig:real_platform}
  \vspace{-8pt}
\end{figure}

\textbf{Real-World Settings.}
We deploy our VLA model for real-world validation using a single Intel RealSense camera mounted in a head (top-down) view.
\modelname{} is trained in simulation and transferred to the real robot via a Sim2Real pipeline. Specifically, we follow the Sim2Real protocol of SimpleVLA-RL~\cite{li2025simplevla}, applying domain randomization in simulation and transferring the resulting policy to the real world. We evaluate on four representative RoboTwin tasks: \emph{Beat Block Hammer}, \emph{Pick Dual Bottles}, \emph{Stack Bowls Two}, and \emph{Place Empty Cup}.
We compare against OpenVLA-OFT (SFT)~\cite{OpenVLA-OFT-2025} and an RFT variant of OpenVLA-OFT trained with SimpleVLA-RL.
All real-world evaluations use 20 trials per task, while simulation results follow the RoboTwin2 benchmark evaluation protocol.

\textbf{Results Analysis.}
Our experiment results in Table~\ref{tab:robotwin2} show that \modelname~achieves the best overall performance in simulation (79.2\%) and matches or surpasses RFT baselines under Sim2Real transfer.
In particular, \modelname~improves real-world success on \emph{Pick Dual Bottles} and \emph{Place Empty Cup}, while remaining competitive on \emph{Stack Bowls Two}.
These results validate the practical applicability of our method for deploying simulation-trained VLA policies in real-world settings.

%% file: section/conclusion.tex
\vspace{-0.1in}
\subsection{Limitation}
\label{Limitation}
\vspace{-0.05in}
DyGRO-VLA has three main limitations: 1) {\it Online training sensitivity}: The online stage requires environment interaction and is sensitive to reward sparsity and hyperparameter tuning, which limits stability and sample efficiency. {\it 2) Task dependency in routing}: The routing mechanism relies on task embeddings trained with known task identities, reducing effectiveness in settings with ambiguous or unlabeled task boundaries. 
{\it 3) Limited real-world evaluation}: Current validation covers only a few tasks and a specific Sim2Real setup; broader testing across diverse tasks, embodiments, and sensing conditions is needed to assess generalization.

\vspace{-0.1in}
\section{Conclusion}\label{Conclusion}
\vspace{-0.1in}
We investigate the scalability challenge of RL post-training for vision--language--action models, where multi-task online optimization can induce cross-task interference and catastrophic forgetting.
To address this, we propose DyGRO-VLA, a two-stage framework that learns task-sharing representations from offline demonstrations and refines behavior online via dynamically routed residual RL experts.
Across LIBERO, RoboTwin2, and real-world Sim2Real experiments, DyGRO-VLA improves multi-task performance and robustness over strong baselines, with notable gains on challenging tasks.
These results highlight the value of preserving shared representations and using dynamic residual modularization for scalable VLA post-training.
An important direction of future work is extending \modelname{} to mobile manipulation tasks that involving locomotion tasks before manipulating objects.